\newcommand{\Tau}{\mathrm{T}}
\newcommand{\M}{\mathcal{M}}
\newcommand{\B}{\mathcal{B}}
\newcommand{\A}{\mathcal{A}}
\newcommand{\SState}{\mathcal{S}}
\newcommand{\Prob}[1]{\mathbb{P}[#1]}
\newcommand{\R}{\mathbb{R}}
\newcommand*\xbar[1]{%
   \hbox{%
     \vbox{%
       \hrule height 0.5pt 
       \kern0.5ex
       \hbox{%
         \kern-0.1em
         \ensuremath{#1}%
         \kern-0.1em
       }%
     }%
   }%
} 
\theoremstyle{plain}
\newtheorem{thm}{Theorem}[section]
\newtheorem{lem}[thm]{Lemma}
\newtheorem{cor}[thm]{Corollary}
\newtheorem{rem}[thm]{Remark}
\theoremstyle{definition}
\newtheorem{defn}{Definition}[section]
\theoremstyle{remark}
\newcommand*{\starnr}{\stepcounter{equation}\tag{\theequation}}
\newlength{\subcolumnwidth}
\newcommand{\nextsubcolumn}[1][]{%
  \cr\noalign{\hfill}
  \if\relax\detokenize{#1}\relax\else\hsize=#1\setlength{\subcolumnwidth}{\hsize}\fi
}
\theoremstyle{plain}
\theoremstyle{definition}
\theoremstyle{remark}
\icmltitlerunning{Eventual Discounting Temporal Logic Counterfactual Experience Replay}
\begin{document}

\twocolumn[
\icmltitle{Eventual Discounting Temporal Logic Counterfactual Experience Replay}



\icmlsetsymbol{equal}{*}

\begin{icmlauthorlist}
\icmlauthor{Cameron Voloshin}{caltech}
\icmlauthor{Abhinav Verma}{penn}
\icmlauthor{Yisong Yue}{caltech,argo}
\end{icmlauthorlist}

\icmlaffiliation{caltech}{Caltech}
\icmlaffiliation{argo}{Latitude AI}
\icmlaffiliation{penn}{Penn State}

\icmlcorrespondingauthor{Cameron Voloshin}{cvoloshin@caltech.edu}

\icmlkeywords{Machine Learning, ICML}

\vskip 0.3in
]



\printAffiliationsAndNotice{}  

\begin{abstract}



Linear temporal logic (LTL) offers a simplified way of specifying tasks for policy optimization that may otherwise be difficult to describe with scalar reward functions.  However, the standard RL framework can be too myopic to find maximally LTL satisfying policies.  This paper makes two contributions.  First, we develop a new value-function based proxy, using a technique we call eventual discounting, under which one can find policies that satisfy the LTL specification with highest achievable probability.  Second, we develop a new experience replay method for generating off-policy data from on-policy rollouts via counterfactual reasoning on different ways of satisfying the LTL specification. Our experiments, conducted in both discrete and continuous state-action spaces, confirm the effectiveness of our counterfactual experience replay approach.

\end{abstract}

\section{Introduction}\label{sec:intro}

In the standard reinforcement learning (RL) framework, the goal is to develop a strategy that maximizes a reward function in an unknown environment. In many applications of RL, a practitioner is responsible for generating the reward function so that the agent will behave desirably after the learning process. However, it can be challenging to convey real-world task specifications through scalar rewards \cite{Randlv1998LearningTD, toromanoff2019deep, ibarz2018reward, zhang2021importance, ng1999policy}. Colloquially known as reward-shaping, practitioners often resort to using heuristic "breadcrumbs" \cite{Sorg2011} to guide the agent towards intended behaviors. Despite the ``reward function is enough'' hypothesis \cite{Sutton1998, silver2021reward}), some tasks may not be reducible to scalar rewards \cite{Abel2021}. 



In response to these challenges, alternative RL frameworks using Linear Temporal Logic (LTL) to specify agent behavior have been studied (see Section \ref{sec:related-work}). LTL can express desired characteristics of future paths of a system \cite{modelchecking}, allowing for precise and flexible task/behavior specification. For example, we may ask for a system to repeatedly accomplish a set of goals in specific succession (see Section \ref{sec:background} for more examples).

Without significant assumptions, there is no precise signal on the probability of a policy satisfying an LTL objective. Existing work overwhelmingly uses Q-learning with a sparse RL heuristic \citep{Bozkurt2020Q, cai2021modular} meant to motivate an agent to generate trajectories that appear to satisfy the task. First, these heuristics involve complicated technical assumptions obscuring access to non-asymptotic guarantees, even in finite state-action spaces. Second, sparse reward functions pose substantial challenges to any gradient-based RL algorithm since they provide poor signal to adequately solve credit assignment. Resolving sparsity involves using a hierarchical approach \cite{Bozkurt2020Q} or re-introducing reward-shaping \cite{Hasanbeig2018lcrl}. See Section \ref{sec:related-work} for further elaboration on prior work.


\textbf{Our contributions.} In this paper, we focus on model-free policy learning of an LTL specified objective from online interaction with the environment. 
We make two technical contributions.  First, we reformulate the RL problem with a modified value-function proxy using a technique we call \emph{eventual discounting}.  The key idea is to account for the fact that optimally satisfying the LTL specification may not depend on the length of time it takes to satisfy it (e.g., ``eventually always reach the goal'').  We prove in Section \ref{sec:discounting} that the optimal policy under eventual discounting maximizes the probability of satisfying the LTL specification.

Second, we develop an experience replay method to address the reward sparsity issue.  Namely, any LTL formula can be converted to a fully known specialized finite state automaton from which we can generate multiple counterfactual trajectories from a single on-policy trajectory.  We call this method \textit{LTL-guided counterfactual experience replay}.   We empirically validate the performance gains of our counterfactual experience replay approach using both finite state/action spaces as well as continuous state/action spaces using both Q-learning and Policy Gradient approaches.  



\section{Preliminaries}\label{sec:background}

We give the necessary background and examples to understand our problem statement and solution approach.
An \emph{atomic proposition} is a variable that takes on a truth value. An \emph{alphabet} over a set of atomic propositions $\text{AP}$ is given by 
$\Sigma = 2^{\text{AP}}$. For example, if $\text{AP} = \{x, y\}$ then $\Sigma = \{\{\}, \{x\}, \{y\}, \{x, y\}\}$. $\Delta(A)$ represents the set of probability distributions over a set $A$. 

\subsection{Running Example}

\begin{figure*}[!ht]
\minipage{.2\textwidth}
  \includegraphics[width=\linewidth]{./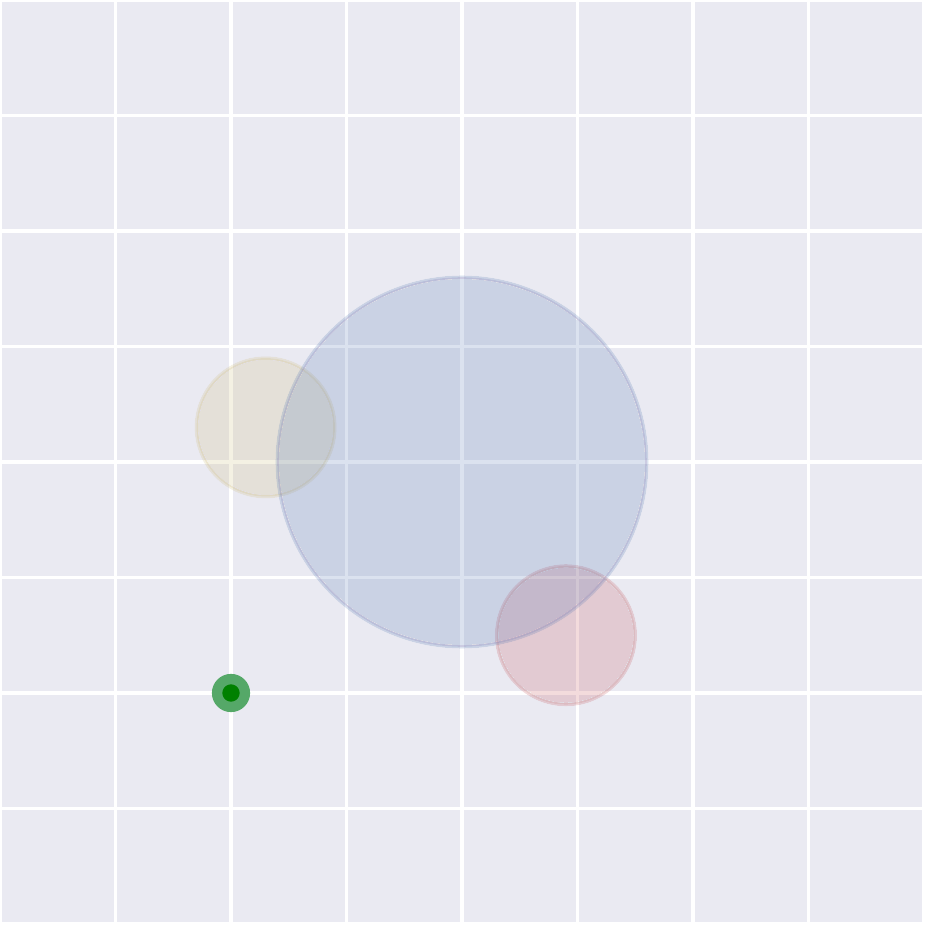}
\endminipage\hfill
\minipage{.25\textwidth}
  \includegraphics[width=\linewidth]{./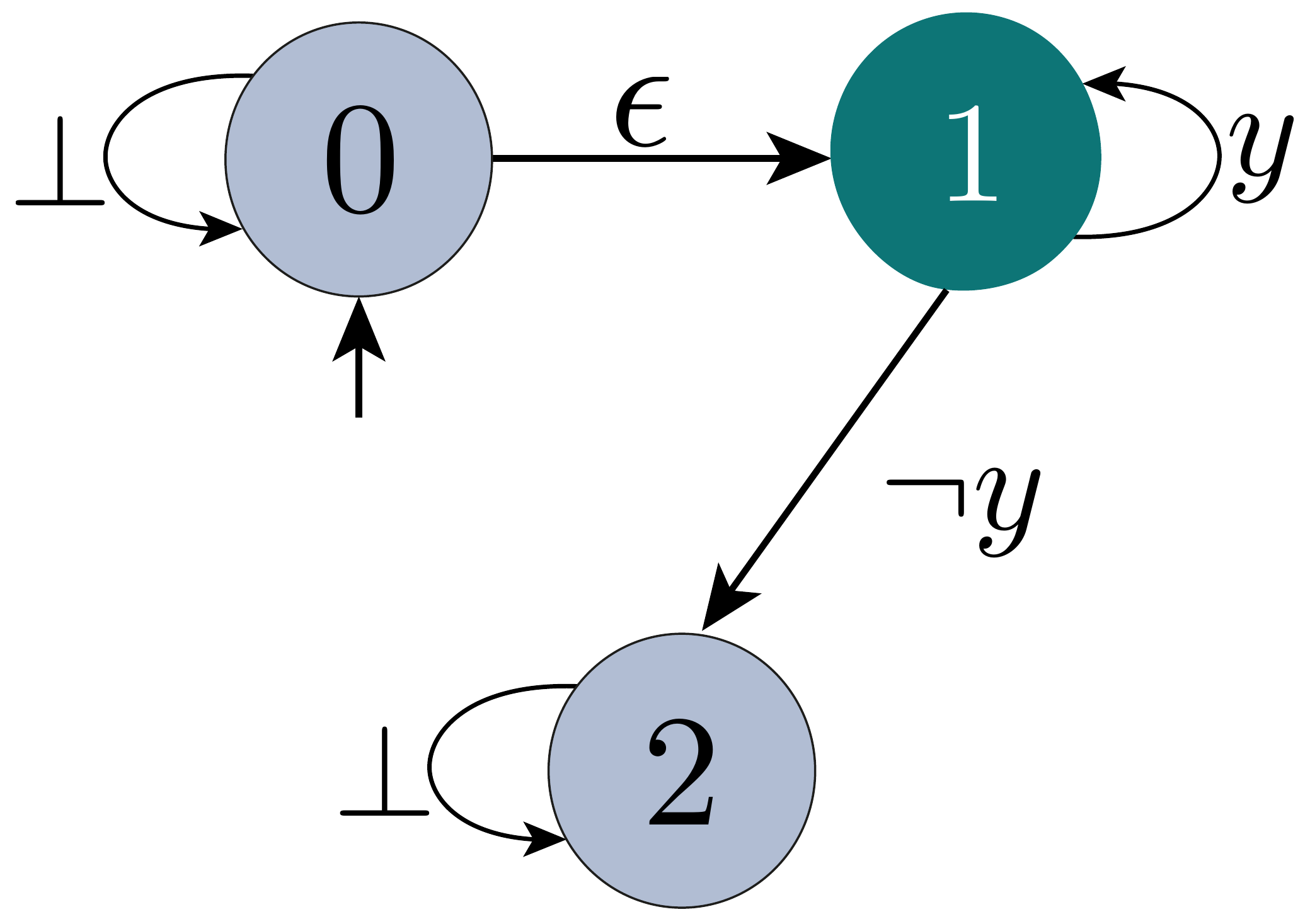}
\endminipage\hfill
\minipage{.3\textwidth}
  \includegraphics[width=\linewidth]{./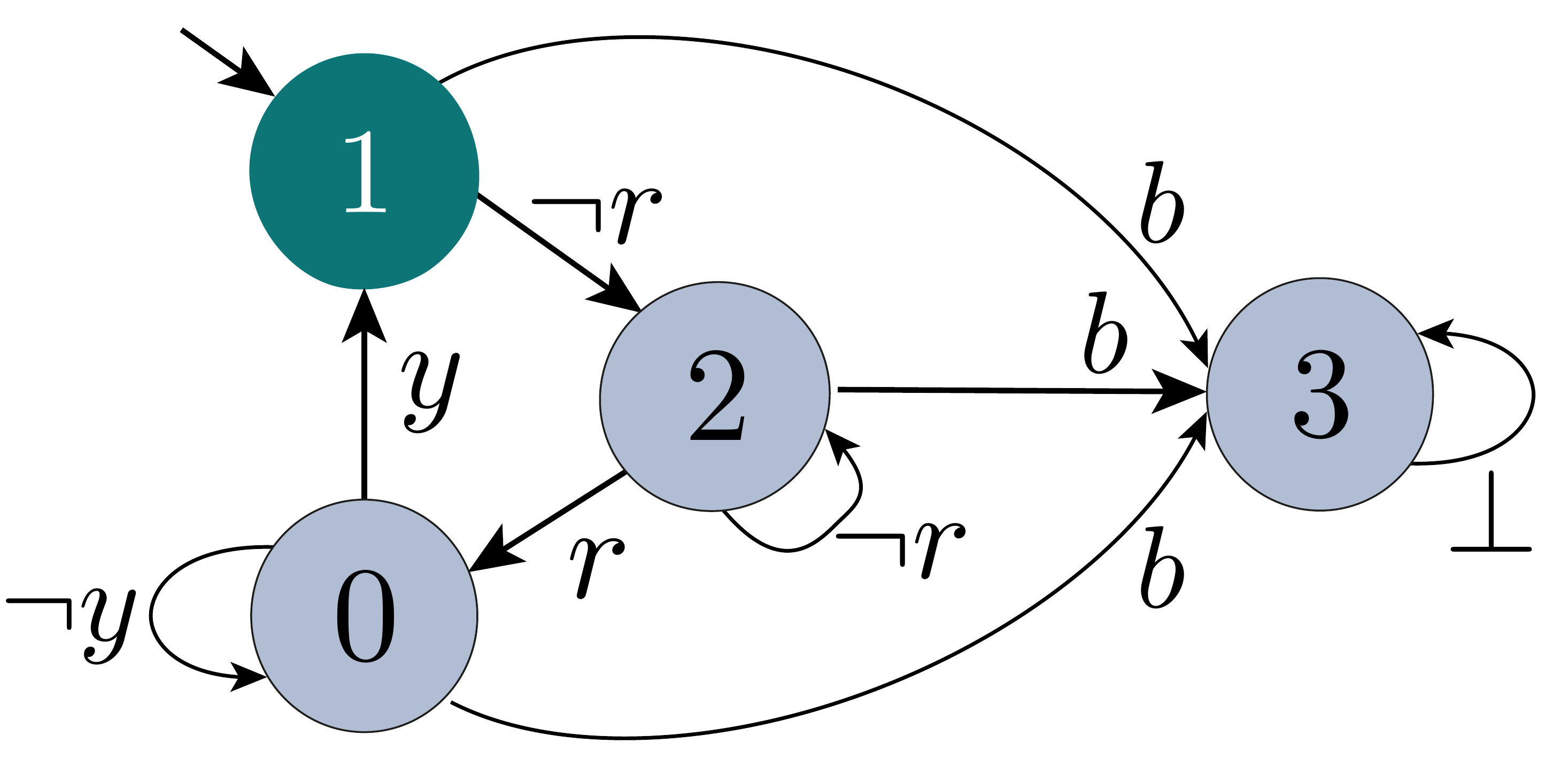}
\endminipage\hfill
\minipage{.25\textwidth}
  \includegraphics[width=\linewidth]{./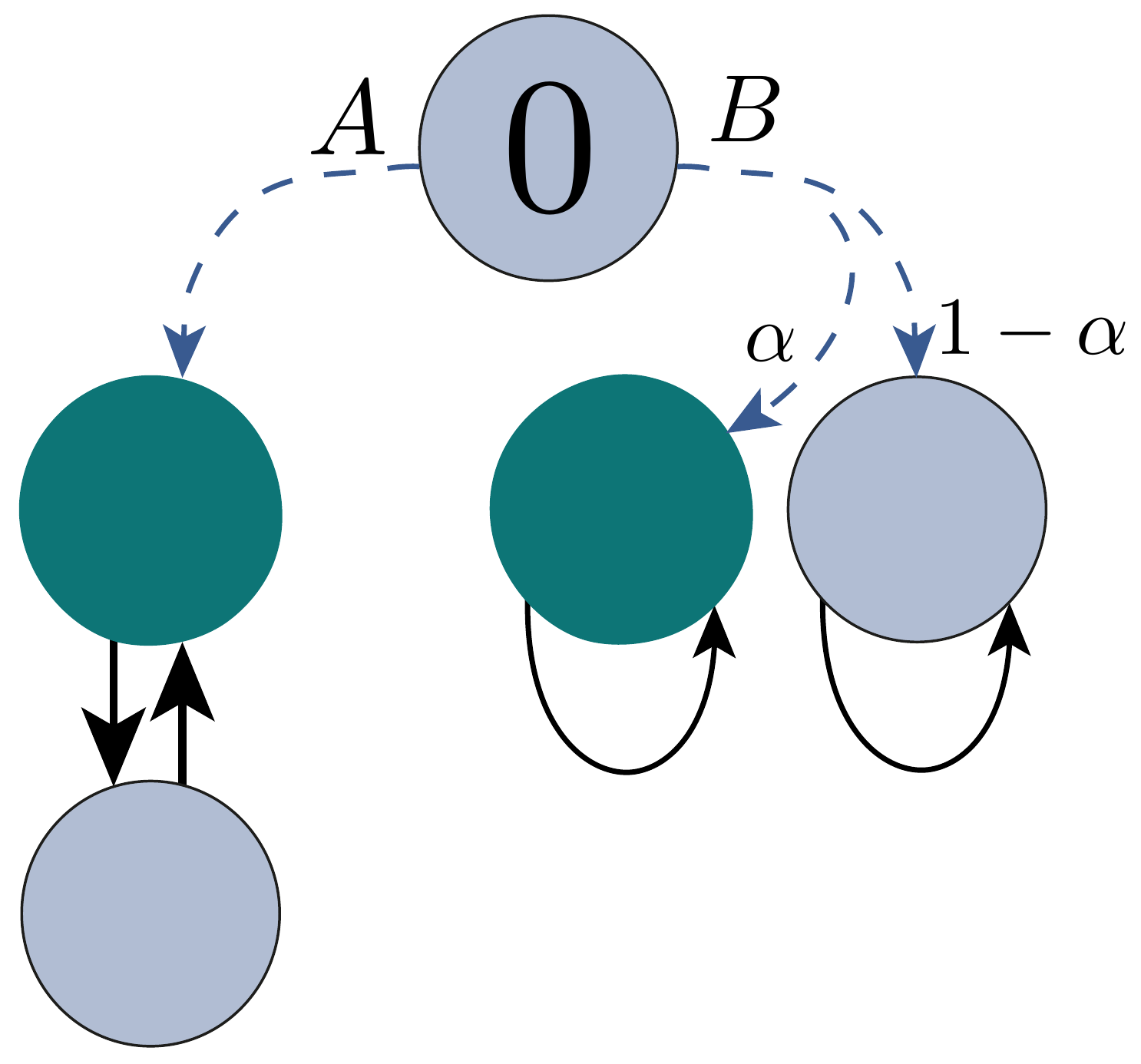}
\endminipage\hfill
\caption{\textit{Examples.} \textit{\textbf{First:}} Illustration of the Flatworld environment. The agent is a green dot and there are 3 zones: yellow, blue and red. \textit{\textbf{Second:}} LDBA $\B$ for ``$FGy$''. $\SState^{\B^\ast} = \{1\}$, denoted by green circle. The initial state is $b_{-1} = 0$. \textit{\textbf{Third:}} LDBA $\B$ for ``$GF(y \;\&\; X F r) \;\&\; G\lnot b$''. $\SState^{\B^\ast} = \{1\}$, denoted by green circle. The initial state is $b_{-1} = 1$. \textit{\textbf{Fourth:}} For this example, an agent starting in state $0$ and solving $\arg\max_{\pi \in \Pi} \mathbb{E}_{\tau \sim \Tau_{\pi}^P}[\sum_{i=0}^\infty \gamma^i \mathbf{1}_{\{b_i \in \SState^{\B^\ast}\}}]$ where $\SState^{\B^\ast}$ is illustrated as the green circles would choose to take action $B$ with probability $1$ if $\alpha \in (1/2,1)$ for any $\gamma \in [0,1]$. Such a policy has $\Prob{\pi \models \varphi} = \alpha < 1$. However the probability optimal policy deterministically takes action $A$, with $\Prob{\pi^\ast \models \varphi} = 1$. This illustrates catastrophic myopic behavior.}
 \label{fig:motivation}
 \vspace{-.1in}
\end{figure*}

We will be using the environment illustrated in Figure \ref{fig:motivation} (First) as a running example. The agent is given by a green dot and there are 3 circular regions in the environment colored yellow, blue and red. The AP are given by $\{y, b, r\}$, referring to the respective colored zones. Elements of $\Sigma$ indicate which zone(s) the agent is in. 

\subsection{MDPs with Labelled State Spaces} 
\label{sec:mdp}

Following a similar notation as \citet{VoloshinLCP2022},
we assume that the environment follows the discounted, labelled Markov Decision Process (MDP) framework given by the tuple $\M = (\SState^\M, \A^\M, P^\M, d_0^\M, \gamma, L^{\mathcal{M}})$ consisting of a state space $\SState^\M$, an action space $\A^\M$, an \textit{\textbf{unknown}} transition function $P^\M : \SState^\M \times \A^\M \to \Delta(\SState^\M)$, an initial state distribution $d^{\mathcal{M}}_0 \in \Delta(\SState^\M)$, and a labelling function $L^{\mathcal{M}}: \SState^\M \to \Sigma$. Let $A^\M(s)$ to be the set of available actions in state $s$. 

Unlike traditional MDPs, $\M$ has a labeling function $L^{\mathcal{M}}$ which returns the atomic propositions that are true in that state. For example, in Figure \ref{fig:motivation} (First), when the agent enters a state $s \in \SState^\M$ such that it is both in the yellow and blue zone then $L^{\mathcal{M}}(s) = \{y, b\}$.


\subsection{Linear Temporal Logic (LTL)} 

Here we give a basic introduction to LTL. For a more comprehensive overview, see \citet{modelchecking}.

\begin{defn}[LTL Specification, $\varphi$]
An LTL specification $\varphi$ is the entire description of the task,  constructed from a composition of atomic propositions with logical connectives: not ($\lnot$), and ($\&$), and implies
($\rightarrow$); and temporal operators: next $(X)$, repeatedly/always/globally ($G$), eventually ($F$), and until ($U$).
\end{defn}

\textbf{Examples.} 
For $AP = \{x, y\}$, some basic task specifications include safety ($G \lnot x$), reachability ($F x $),
stability ($FG x$), response ($x \rightarrow F y$), and progress $(x \;\&\; X F y)$. 

Consider again the environment in Figure \ref{fig:motivation} (First) where $AP = \{y, r, b\}$. If the task is to eventually reach the yellow zone and stay there (known as stabilization) then we write $\varphi = F G y$. Or, if we would like the agent to infinitely loop between the yellow and red zone while avoiding the blue zone then $\varphi = GF(y \;\&\; X F r) \;\&\; G\lnot b$, a combination of safety, reachability, and progress. 

\subsection{LTL Satisfaction}

LTL has recursive semantics defining the meaning for logical connective satisfaction. Without loss of generality, we will be using a specialized automaton, an LDBA $\B_\varphi$ \cite{Sickert2016ldba}, defined below to keep track of the progression of $\varphi$ satisfaction. More details for constructing LDBAs are in \citet{hahn2013lazy, modelchecking, kvretinsky2018owl}. We drop $\varphi$ from $\B_\varphi$ for brevity. 

\begin{defn}(Limit Deterministic B\"uchi Automaton, LDBA \cite{Sickert2016ldba})
\label{def:ldba}
An \textbf{\textit{LDBA}} is a tuple $\B = (\SState^\B, \Sigma \cup \A_\B, P^\B, \SState^{\B^\ast}, b^\B_{-1})$ consisting of (i) a finite set of states $\SState^\B$, (ii) a finite alphabet $\Sigma = 2^{\text{AP}}$, $\A_\B$ is a set of indexed jump transitions (iii) a transition function $P^\B : \SState^\B \times (\Sigma \cup \A_\B) \to \SState^\B$, (iv) accepting states $\SState^{\B^\ast} \subseteq \SState^\B$, and (v) initial state $b^\B_{-1}$. There exists a mutually exclusive partitioning of $\SState^\B = \SState^\B_D \cup \SState^\B_{N}$ such that $\SState^{\B^\ast} \subseteq \SState^\B_D$, and for $b \in S^\B_D, a \in \Sigma$ then $P^\B(b, a) \subseteq \SState^\B_D$, closed. $\A_\B(b)$ is only (possibly) non-empty for $b \in \SState^\B_{N}$ and allows $\B$ to transition to $\SState^\B_{D}$ without reading an AP. 
A \textit{path} $\varrho = (b_0, b_1, \ldots)$ is a sequence of states in $\B$ reached through successive transitions under $P^\B$.
\end{defn}

\begin{defn}($\B$ accepts)
      $\B$ \textbf{accepts} a path $\varrho$ if there exists some state $b \in \SState^{\B^\ast}$ in the path that is visited infinitely often.
\end{defn}

\textbf{Examples.} 
Consider again the environment in Figure \ref{fig:motivation} (First) where $AP = \{y, r, b\}$. If we would like to make an LDBA for $\varphi = F G y$ (reach and stabilize at $y$) then we would get the state machine seen in Figure \ref{fig:motivation} (Second). In this state machine, the agent starts at state $0$. The accepting set is given by $\SState^{\B^\ast} = \{1\}$. The transition between state $0$ and state $1$ is what is formally referred to as a jump transition: $\A_\B(0) = \{\epsilon\}$ while $\A_\B(\cdot) = \varnothing$ otherwise. Whenever the agent is in state $0$ of the LDBA, there is a choice of whether to stay at state $0$ or transition immediately to state $1$. This choice amounts to the agent believing that it has satisfied the 
``eventually'' part of the LTL specification. When the agent takes this jump, then it must thereafter satisfy $y$ to stay in state $1$. The agent gets the decision of when it believes it is capable of satisfying $y$ thereafter. When the agent takes the jump, if it fails to stay in $y$, it immediately transitions to the sink, denoted state $2$. 
The LDBA accepts when the state $1$ is reached infinitely often, meaning the agent satisfies ``always $y$'' eventually, as desired. 

Another example, this time without jump transitions, would be for $\varphi = GF(y \;\&\; X F r) \;\&\; G\lnot b$ (oscillate between $y$ and $r$ forever while avoiding $b$). The LDBA can be seen in Figure \ref{fig:motivation} (Third). In this state machine, the agent starts at state $1$ and the accepting set is given by $\SState^{\B^\ast} = \{1\}$. To make a loop back to state $1$, the agent must visit both $r$ and $y$. Doing so infinitely often satisfies the LDBA condition and therefore the specification. If at any point $b$ is encountered then the agent transitions to the sink, denoted state $3$.

\section{Problem Formulation}

We first introduce slightly more notation. Let $\mathcal{Z} = \SState^\mathcal{M} \times \SState^\B$. Let $\Pi: \mathcal{Z} \times \A \to \Delta([0, 1])$ be a (stochastic) policy class over the product space of the MDP and the LDBA (defined below), where $\A((s,b)) = \A^\mathcal{M}(s) \cup \A^\B(b)$, to account for jump transitions in $\B$. 

\textbf{Synchronizing the MDP with the LDBA.} For any $(s, b) \in \mathcal{Z}$, a policy $\pi \in \Pi$ is able to select an action in $\A^\mathcal{M}(s)$ or an action in $\A^\B(b)$, if available. We can therefore generate a \textbf{trajectory} as the sequence $\tau = (s_0, b_0, a_0, s_1, b_0, a_1, \ldots)$ under a new probabilistic transition relation given by
\begin{multline}\label{eq:relation}
P(s', b'|s, b, a) = \\
\begin{cases} 
    P^{\M}(s, a, s') 
    &a \in A^{\M}(s), b' \in P^\B(b, L(s')) \\
    1, &a \in  A^{\B}(b), b' \in P^\B(b, a), s=s' \\
    0, &\text{otherwise}
    \end{cases}
\end{multline}

Let the LDBA projection of $\tau$ be the subsequence $\tau_{\B} = (b_0, b_1, \ldots)$.  Elements of $\tau_{\B}$ can be thought of as tracking an agent's LTL specification satisfaction: 

\begin{defn}[Run Satisfaction, $\tau \models \varphi$]
We say a trajectory satisfies $\varphi$ if $\B$ accepts $\tau_\B$, which happens if $\exists b \in \tau_{\B}$ infinitely often with $b \in \SState^{\B^\ast}$.
\end{defn}



Let $\Tau^P_\pi = \mathbb{E}_{z \sim d_0^\mathcal{M} \times \{b_{-1}\}}[\Tau^P_\pi(z)]$ be the distribution over all possible trajectories starting from any initial state $z \in d_0^\mathcal{M} \times \{b_{-1}\}$ where $\Tau^P_\pi(z)$ is the (conditional) distribution over all possible trajectories starting from $z \in \mathcal{Z}$ generated by $\pi$ under relation $P$ (given in \eqref{eq:relation}). The probability of LTL satisfaction results from counting how many of the trajectories satisfy the LTL specification:

\begin{defn}[State Satisfaction, $z \models \varphi$]
$\mathbb{P}_{\pi}[z \models \varphi] 
= \mathbb{E}_{\tau \sim \Tau^P_\pi(z)}[\mathbf{1}_{\{\tau \models \varphi\}}] = \mathbb{E}_{\tau \sim \Tau^P_\pi}[\mathbf{1}_{\{\tau \models \varphi\}} | z_0 = z]$ 
\end{defn}

\begin{defn}[Policy Satisfaction, $\pi \models \varphi$]
$\mathbb{P}[\pi \models \varphi] = \mathbb{E}_{\tau \sim \Tau^P_\pi}[\mathbf{1}_{\{\tau \models \varphi\}}]$ where $\mathbf{1}_X$ is the indicator for $X$.
\end{defn}

Ideally we would like to find a policy with highest probability of LTL specification satisfaction: one that generates the most number of LTL-satisfying runs. Formally,
\begin{equation}\label{prob:general}
    \pi^\ast \in \arg\max_{\pi \in \Pi} \mathbb{P}[\pi \models \varphi].
\end{equation}
We note that Eq \eqref{prob:general} is the standard starting point for formulating policy optimization for LTL satisfaction \citep{Yang2021Intractable, Bozkurt2020Q, cai2021modular, Hasanbeig2018lcrl, hasanbeig2020deep, VoloshinLCP2022}.

\section{RL-Friendly Form: Eventual Discounting}
\label{sec:discounting}


Unfortunately, the maximization problem in Eq \eqref{prob:general} is not easily optimized since we dont have a direct signal on $\mathbb{P}[\pi \models \varphi]$. Without any additional assumptions (such as structured knowledge of the MDP), any finite subsequence can only give evidence on whether $\tau \models \varphi$ but not a concrete proof. 

\textbf{Eventual Discounting.}
To address the above issue, we develop a modified value-function based surrogate as follows.
Given a trajectory $\tau = (s_0, b_0, a_0, \ldots)$, we  keep track of how often $b_i \in \SState^{\B^\ast}$ and incentivize an agent to visit $\SState^{\B^\ast}$ as many times as possible. In particular, under eventual discounting, the value function will give the agent a reward of $1$ when in a state $b_i \in \SState^{\B^\ast}$ and not discount length of time between visits to $\SState^{\B^\ast}$. Formally, we will be seeking
\begin{equation}\label{prob:discounted}
    \pi_\gamma^\ast \in \arg\max_{\pi \in \Pi} \mathbb{E}_{\tau \sim \Tau_{\pi}^P}[\sum_{i=0}^\infty \Gamma_i \mathbf{1}_{\{b_i \in \SState^{\B^\ast}\}}]\ \ \  (\equiv V_{\pi}^\gamma),
\end{equation}
where $\Gamma_0 = 1$ and
 \begin{equation}\label{def:gamma}
    \Gamma_i = \prod_{t=0}^{i-1} \gamma(b_t), 
    \quad \gamma(b_t) = 
    \begin{cases}
    \gamma, \quad b_t \in \SState^{\B^\ast} \\
    1, \quad \text{otherwise}
    \end{cases}.
\end{equation}

\textbf{Intuition for $\Gamma_i$.} At first glance setting $\Gamma_i = \gamma^i$ to be the traditional RL exponential discount rate would seem reasonable. 
Unfortunately, $\nexists \gamma \in [0, 1]$ with $\Gamma_i = \gamma^i$ that avoids catastrophic myopic behavior. In particular, take Figure \ref{fig:motivation} (Fourth). The agent starts in state $0$ and only has two actions $A$ and $B$. Taking action $A$ transitions directly to an accepting state from which point the accepting state is visited every $2$ steps. On the other hand, action $B$ transitions to an accepting state with probability $\alpha$ and a sink state with probability $1-\alpha$. The accepting state reached by action $B$ is revisited every step. Suppose $\beta = \pi(A) = 1-\pi(B)$ then we can calculate:
\begin{equation}
\mathbb{E}_{\tau \sim \Tau_{\pi}^P}[\sum_{i=0}^\infty \gamma^i \mathbf{1}_{\{b_i \in \SState^{\B^\ast}\}}] =  \frac{\beta}{1-\gamma^2} +  \frac{(1-\beta)\alpha}{1-\gamma}.
\end{equation}
For $\alpha > 1/2$, the optimal choice $\beta$ is $\beta = 0$ implying that $P(\pi \models \varphi) = \alpha$. When $\alpha \in (1/2,1)$ then this implies that $\pi$ is not probability optimal. Indeed, $P(\pi \models \varphi) = \alpha < 1$ when $\beta = 0$ but $P(\pi^\ast \models \varphi) = 1$ by selecting $\beta = 1$. The intuition here, which can be formalized by taking $\gamma \to 1$, is that the average reward for taking action $A$ is $\frac{1}{2}$ while the average reward for taking action $B$ is $1$ with probability $\alpha$, which is worth the risk for large $\alpha > 1/2$.

To avoid this myopic behavior, we must avoid discriminating between return times between good states. The number steps (on average) it takes to return to $\SState^{\B^\ast}$ is irrelevant: we only require that the system does return. For this reason we do not count time (hence $\gamma = 1$) in our definition of $\Gamma_i$ when the system is not in $\SState^{\B^\ast}$. We call this \emph{eventual discounting}.



\subsection{Analysis of $\pi^\ast_\gamma$}

In this section we analyze how the probability of $\pi_\gamma^\ast$ satisfying $\varphi$ compares to that of the best possible one $\pi^\ast$.

Let the set $O(\tau) = \{i: b_i \in \SState^{\B^\ast}\}$ denote the occurences (time steps) when a good state is reached. This quantity is natural since $|O(\tau)| = \infty$ if and only if $\tau \models \varphi$. 

\begin{lem}\label{lem:bound} For any $\pi \in \Pi$ and $\gamma \in (0, 1)$, we have
\begin{equation*}
|(1-\gamma) V_{\pi}^\gamma - \mathbb{P}[\pi \models \varphi]| \leq \log(\frac{1}{\gamma}) O_\pi
\end{equation*}
where $O_\pi = \mathbb{E}_{\tau \sim \Tau_{\pi}^P}\left[|O(\tau)|\bigg| 
\tau \not\models \varphi \right]$ is the expected number of visits to an accepting state for the trajectories that do not satisfy $\varphi$.
\end{lem}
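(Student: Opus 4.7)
The plan is to rewrite $V_\pi^\gamma$ by re-indexing the sum so that it is counted per accepting-state visit rather than per time step, then compare the resulting expression to $\mathbb{P}[\pi \models \varphi]$ directly.

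First I would observe the key structural fact behind the definition of $\Gamma_i$: the product $\Gamma_i = \prod_{t=0}^{i-1}\gamma(b_t)$ multiplies a $\gamma$ factor only when $b_t \in \SState^{\B^\ast}$ and a $1$ otherwise, so for a fixed trajectory $\tau$ we have $\Gamma_i = \gamma^{N_i(\tau)}$, where $N_i(\tau) := |\{t < i : b_t \in \SState^{\B^\ast}\}|$. Consequently the only nonzero terms in the inner sum are at the indices $i \in O(\tau)$, and at the $k$-th such index (with $k = 0, 1, \dots$) the prefactor equals $\gamma^k$. Thus, for every trajectory $\tau$,
\begin{equation*}
\sum_{i=0}^\infty \Gamma_i \mathbf{1}_{\{b_i \in \SState^{\B^\ast}\}} = \sum_{k=0}^{|O(\tau)|-1}\gamma^k.
\end{equation*}

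Next I would split the expectation over $\tau \sim \Tau_\pi^P$ into the satisfying and non-satisfying events. On $\{\tau \models \varphi\}$ we have $|O(\tau)| = \infty$, so the geometric series equals $\tfrac{1}{1-\gamma}$; on $\{\tau \not\models \varphi\}$ it equals $\tfrac{1-\gamma^{|O(\tau)|}}{1-\gamma}$ (with the convention $\gamma^0 = 1$, handling trajectories that never visit $\SState^{\B^\ast}$). Multiplying through by $(1-\gamma)$ and collecting terms yields
\begin{equation*}
(1-\gamma) V_\pi^\gamma - \mathbb{P}[\pi \models \varphi] = \mathbb{P}[\tau \not\models \varphi]\cdot \mathbb{E}_{\tau \sim \Tau_\pi^P}\!\left[1 - \gamma^{|O(\tau)|} \,\bigg|\, \tau \not\models \varphi\right].
\end{equation*}
Note that this difference is non-negative, so the absolute value is immediate.

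Finally, I would apply the elementary inequality $1 - \gamma^n \le n\log(1/\gamma)$ for all $n \ge 0$, which comes straight from $e^{-x} \ge 1 - x$ with $x = n\log(1/\gamma)$. Plugging this bound into the conditional expectation gives $\mathbb{E}[1 - \gamma^{|O(\tau)|} \mid \tau \not\models \varphi] \le \log(1/\gamma)\,O_\pi$, and using $\mathbb{P}[\tau \not\models \varphi] \le 1$ delivers the claimed bound. The only subtlety is the re-indexing step in the first paragraph and justifying the interchange of expectation and the (non-negative) infinite sum, which is immediate by Tonelli; the rest is routine algebra, so I do not expect any real obstacle.
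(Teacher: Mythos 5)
Your proof is correct and follows essentially the same route as the paper: the same re-indexing of $\sum_i \Gamma_i \mathbf{1}_{\{b_i \in \SState^{\B^\ast}\}}$ into a geometric series of length $|O(\tau)|$, the same split over $\{\tau \models \varphi\}$ versus $\{\tau \not\models \varphi\}$, and the same final estimate — your pointwise inequality $1-\gamma^n \le n\log(1/\gamma)$ (from $e^{-x}\ge 1-x$) is exactly the paper's tangent-line bound on the moment generating function $M_\pi(\log\gamma)$, just applied inside the expectation rather than to the MGF as a whole. Your packaging of both directions into the single exact identity $(1-\gamma)V_\pi^\gamma - \mathbb{P}[\pi\models\varphi] = \mathbb{P}[\tau\not\models\varphi]\,\mathbb{E}[1-\gamma^{|O(\tau)|}\mid \tau\not\models\varphi] \ge 0$ is a slightly cleaner presentation of what the paper does in two separate steps, but it is not a different argument.
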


\begin{proof}
Fix some state $z = (s, b) \in  \mathcal{Z}$.
\begin{align*}
V_{\pi}^\gamma(z) &= \mathbb{E}_{\tau \sim \Tau_{\pi}^P}[\sum_{i=0}^\infty \Gamma_i \mathbf{1}_{\{b_i \in \SState^{\B^\ast}\}} | z_0 = z] \\
&= \mathbb{E}_{\tau \sim \Tau_{\pi}^P}\left[\sum_{j = 0}^{|O(\tau)|} \gamma^j | z_0 = z \right] 
\end{align*}
Using the fact that $\sum_{j=0}^k \gamma^j = \frac{1-\gamma^k}{1-\gamma}$, we have
\begin{multline}\label{eq:expansion}
V_{\pi}^\gamma(z) = \mathbb{E}_{\tau \sim \Tau_{\pi}^P}\left[\frac{1-\gamma^{|O(\tau)|}}{1-\gamma}\bigg| \substack{
\tau \models \varphi \\
z_0 = z \\
}
 \right] \mathbb{P}_{\pi}[z \models \varphi] \\
 +\mathbb{E}_{\tau \sim \Tau_{\pi}^P}\left[\frac{1-\gamma^{|O(\tau)|}}{1-\gamma}\bigg| \substack{
\tau \not\models \varphi \\
z_0 = z \\
} \right] \mathbb{P}_{\pi}[z \not\models \varphi] 
\end{multline}
Since $|O(\tau)| = \infty$ for any $\tau \models \varphi$, 
\begin{equation}\label{eq:equality}
    \mathbb{E}_{\tau \sim \Tau_{\pi}^P}\left[\frac{1-\gamma^{|O(\tau)|}}{1-\gamma}\bigg| \substack{
\tau \models \varphi \\
z_0 = z \\
}
 \right] = \frac{1}{1-\gamma} 
\end{equation}
together with $\mathbb{P}_{\pi}[z \not\models \varphi] \geq 0$ implies
\begin{equation}
V_{\pi}^\gamma(z) \geq \frac{1}{1-\gamma} \mathbb{P}_{\pi}[z \models \varphi].
\end{equation}
Taking the expectation over initial states we have
\begin{equation}\label{eq:lower_bound}
V_{\pi}^\gamma \geq \frac{1}{1-\gamma} \mathbb{P}[\pi \models \varphi].
\end{equation}
Now we find an upper bound. Let $M_\pi(t) = \mathbb{E}_{\tau \sim \Tau_{\pi}^P}\left[e^{ t |O(\tau)|}\bigg| 
\tau \not\models \varphi \right]$. Starting again with Eq \eqref{eq:expansion} and using Eq \eqref{eq:equality}, we have
\begin{equation}
V_{\pi}^\gamma(z) 
\leq \frac{\mathbb{P}_{\pi}[z \models \varphi]}{1-\gamma}  + \frac{1- \mathbb{E}_{\tau \sim \Tau_{\pi}^P}\left[e^{ \log(\gamma) |O(\tau)|}\bigg| \substack{
\tau \not\models \varphi \\
z_0 = z \\
} \right] }{1-\gamma}
\end{equation}
where we have used that $\mathbb{P}_{\pi}[z \not\models \varphi] \leq 1$
for any $z \in \mathcal{Z}$. Taking the expectation with respect to the initial state distribution then we have
\begin{equation}\label{eq:upperbound2}
(1-\gamma) V_{\pi}^\gamma \leq \mathbb{P}[\pi \models \varphi]  + 1 - M_\pi(\log(\gamma))
\end{equation}
In particular, $M_{\pi}(t)$ is convex and therefore it lies above its tangents: 
\begin{align*}
M_{\pi}(t) \geq M_{\pi}(0) + t M'_{\pi}(0) &= 1 + t  \mathbb{E}_{\tau \sim \Tau_{\pi}^P}\left[|O(\tau)|\bigg| 
\tau \not\models \varphi \right] \\
&= 1 + t O_{\pi}
\end{align*} 
Plugging this inequality into Eq \eqref{eq:upperbound2}, together with Eq \eqref{eq:lower_bound},
\begin{equation}\label{eq:upperbound3}
\mathbb{P}[\pi \models \varphi] \leq (1-\gamma) V_{\pi}^\gamma \leq \mathbb{P}[\pi \models \varphi]  + \log(\frac{1}{\gamma}) O_\pi
\end{equation}
Subtracting $\mathbb{P}[\pi \models \varphi]$ from both sides and taking the absolute value completes the proof.
\end{proof}

\begin{thm}
\label{thm:finite_bound}(Non-asymptotic guarantee) For any $\gamma \in (0, 1)$,
\begin{equation}
\sup_{\pi \in \Pi} \mathbb{P}[\pi \models \varphi] - \mathbb{P}[\pi_\gamma^\ast \models \varphi] \leq 2 \log(\frac{1}{\gamma}) \sup_{\pi \in \Pi} O_\pi
\end{equation}
where $O_\pi = \mathbb{E}_{\tau \sim \Tau_{\pi}^P}\left[|O(\tau)|\bigg| 
\tau \not\models \varphi \right]$.
\end{thm}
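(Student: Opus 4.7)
The plan is to apply Lemma \ref{lem:bound} twice, once at an arbitrary comparator policy and once at $\pi_\gamma^\ast$, and glue the two inequalities together using the fact that $\pi_\gamma^\ast$ is by definition the maximizer of $V^\gamma_\pi$. Since Lemma \ref{lem:bound} bounds an absolute value, it yields two one-sided inequalities; I would split them in the natural direction for each policy, so that the optimality transfer from $V^\gamma$ to $\mathbb{P}[\cdot \models \varphi]$ only costs $\log(1/\gamma) O_\pi$ at each end.

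Concretely, I would first fix an arbitrary $\pi \in \Pi$ and use the upper direction of Lemma \ref{lem:bound},
\[
\mathbb{P}[\pi \models \varphi] \;\le\; (1-\gamma)\, V_\pi^\gamma + \log(1/\gamma)\, O_\pi.
\]
Next, by the defining optimality of $\pi_\gamma^\ast$ in \eqref{prob:discounted}, $(1-\gamma) V_\pi^\gamma \le (1-\gamma) V_{\pi_\gamma^\ast}^\gamma$. Finally, I would apply the other direction of Lemma \ref{lem:bound} at $\pi_\gamma^\ast$ to get
\[
(1-\gamma)\, V_{\pi_\gamma^\ast}^\gamma \;\le\; \mathbb{P}[\pi_\gamma^\ast \models \varphi] + \log(1/\gamma)\, O_{\pi_\gamma^\ast}.
\]
Chaining the three displayed inequalities yields
\[
\mathbb{P}[\pi \models \varphi] - \mathbb{P}[\pi_\gamma^\ast \models \varphi] \;\le\; \log(1/\gamma)\bigl( O_\pi + O_{\pi_\gamma^\ast}\bigr) \;\le\; 2\log(1/\gamma)\sup_{\pi' \in \Pi} O_{\pi'}.
\]
Taking the supremum over $\pi \in \Pi$ on the left-hand side (the right-hand side no longer depends on $\pi$) gives exactly the claimed bound.

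I do not expect a serious obstacle; the only subtlety is handling the case where the suprema on the left-hand side and in $\sup_{\pi'} O_{\pi'}$ are not attained, which is resolved by taking $\pi$ along an approximating sequence and passing to the limit, and the hypothesis $\gamma \in (0,1)$ ensures $\log(1/\gamma)$ is finite and positive so the two applications of Lemma \ref{lem:bound} combine cleanly. The factor of $2$ in the bound corresponds precisely to using Lemma \ref{lem:bound} once at the comparator and once at $\pi_\gamma^\ast$.
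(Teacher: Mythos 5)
Your proposal is correct and is essentially the paper's own argument: both apply Lemma \ref{lem:bound} once at a (near-)optimal comparator and once at $\pi_\gamma^\ast$, use the optimality of $\pi_\gamma^\ast$ for $V^\gamma$ to drop the middle term, and pass to the supremum via an approximating sequence. The only cosmetic difference is that you chain three one-sided inequalities where the paper writes a telescoping decomposition followed by the triangle inequality.
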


\begin{proof}


Consider any sequence $\{\pi_i\}_{i=1}^\infty$ such that $\mathbb{P}[\pi_i \models \varphi] \to \sup_{\pi} \mathbb{P}[\pi \models \varphi]$ as $i \to \infty$. Then we have for any $\pi_i$,
\begin{align*}
\mathbb{P}[\pi_i \models \varphi] - \mathbb{P}[\pi_\gamma^\ast \models \varphi] &= \mathbb{P}[\pi_i \models \varphi] - (1-\gamma) V_{\pi_i}^\gamma \\
&\quad+ (1-\gamma)V_{\pi_i}^\gamma - (1-\gamma)V_{\pi_\gamma^\ast}^\gamma \\
&\quad+ (1-\gamma) V_{\pi_\gamma^\ast}^\gamma - \mathbb{P}[\pi_\gamma^\ast \models \varphi] \\
&\stackrel{(a)}{\leq} |\mathbb{P}[\pi_i \models \varphi] - (1-\gamma) V_{\pi_i}^\gamma| \\
&\quad+ |\mathbb{P}[\pi_\gamma^\ast \models \varphi] - (1-\gamma) V_{\pi_\gamma^\ast}^\gamma|
\\
&\stackrel{(b)}{\leq}  \log(\frac{1}{\gamma})(O_{\pi_i} + O_{\pi_\gamma^\ast}) \\
&\stackrel{(c)}{\leq} 2 \log(\frac{1}{\gamma}) \sup_{\pi \in \Pi} O_\pi
\end{align*}
where $(a)$ is triangle inequality together with removing the term $(1-\gamma)V_{\pi_i}^\gamma - (1-\gamma)V_{\pi_\gamma^\ast}^\gamma$ since it is nonpositive by definition of $\pi_\gamma^\ast$, $(b)$ is an application of Lemma \ref{lem:bound}, and $(c)$ is a supremum over all policies. Taking the limit on both sides as $i \to \infty$ completes the proof.
\end{proof}


\begin{cor}\label{cor:finite}
If the number of policies in $\Pi$ is finite then $\sup_{\pi \in \Pi} O_\pi = m < \infty$ is attained, is a finite constant and
 $$
 \sup_{\pi \in \Pi} \mathbb{P}[\pi \models \varphi] - \mathbb{P}[\pi_\gamma^\ast \models \varphi] \leq 2 m \log(\frac{1}{\gamma})
 $$
\end{cor}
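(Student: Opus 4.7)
The plan is to invoke Theorem~\ref{thm:finite_bound} as a black box, so the only substantive work is verifying that the multiplicative constant $\sup_{\pi \in \Pi} O_\pi$ is both attained and finite once $\Pi$ is a finite set.

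Attainment is immediate from the set-theoretic fact that a supremum over a finite collection is a maximum: $\sup_{\pi \in \Pi} O_\pi = \max_{\pi \in \Pi} O_\pi$, realized at some $\pi^\circ \in \Pi$. Call this maximum $m$. Since the maximum of finitely many finite numbers is finite, it suffices to show $O_\pi < \infty$ for each individual $\pi \in \Pi$.

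To show $O_\pi < \infty$ for a fixed $\pi$, I would reason as follows. By Definition of LDBA acceptance, $\tau \not\models \varphi$ exactly when the accepting set $\SState^{\B^\ast}$ is visited only finitely often along $\tau_\B$, so $|O(\tau)| < \infty$ almost surely on the conditioning event $\{\tau \not\models \varphi\}$. Passing from this almost-sure finiteness to finiteness of the conditional expectation is the one step with content: I would use the finite-state structure of the product Markov chain induced by $\pi$ on $\mathcal{Z}$, which decomposes into a transient part together with finitely many bottom strongly connected components. Non-satisfying runs are precisely those absorbed into a BSCC disjoint from $\SState^{\B^\ast}$, and the number of visits to $\SState^{\B^\ast}$ prior to absorption has a geometric tail (bounded by the uniform lower bound, across accepting states, on the probability of eventually hitting such a non-accepting BSCC), hence finite mean. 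Therefore $O_\pi < \infty$, and taking the finite max gives $m < \infty$.

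With $m < \infty$ established, substituting $\sup_{\pi \in \Pi} O_\pi = m$ into the conclusion of Theorem~\ref{thm:finite_bound} immediately yields $\sup_{\pi \in \Pi} \mathbb{P}[\pi \models \varphi] - \mathbb{P}[\pi_\gamma^\ast \models \varphi] \leq 2 m \log(1/\gamma)$. The main obstacle is really the conditional-expectation finiteness step, which quietly leans on the finiteness of the product MDP (an assumption compatible with the ``finite $\Pi$'' regime of the corollary); attainment and the final substitution are one-line observations.
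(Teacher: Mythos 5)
Your proposal is correct and follows the same route as the paper, which offers no explicit proof for this corollary: it is meant to be read as an immediate consequence of Theorem~\ref{thm:finite_bound} plus the observation that a supremum over a finite index set is attained as a maximum. The one place where you do more work than the paper is your argument that $O_\pi < \infty$ for each individual $\pi$ (almost-sure finiteness of $|O(\tau)|$ on $\{\tau \not\models \varphi\}$ does not by itself give a finite conditional mean, so this step genuinely needs something). Be aware, however, that your BSCC/geometric-tail argument presupposes that the product space $\mathcal{Z}$ is finite and that $\pi$ induces a finite Markov chain; that hypothesis belongs to Corollary~\ref{cor:deterministic}, not to this corollary, whose only assumption is that $\Pi$ is a finite \emph{set of policies} (possibly over a continuous state space, possibly non-Markovian in a way that does not induce a finite chain). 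In that more general regime a finite maximum of the $O_\pi$ is still attained, but it is finite only if each $O_\pi$ is, which the paper implicitly takes for granted rather than proves --- so your extra step either silently strengthens the hypotheses or must be replaced by reading the corollary as conditioning on each $O_\pi$ being finite. The final substitution into Theorem~\ref{thm:finite_bound} is exactly as in the paper.
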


\begin{cor}\label{cor:deterministic}
In the case that $\SState^\mathcal{M}$ and $\A^\mathcal{M}$ are finite, 
 then $\mathcal{Z}$ and $\A$ are finite. It is known that optimal policies are deterministic \cite{puterman2014markov} and therefore there we need only consider deterministic policies, for which there are a finite number. Thus $\sup_{\pi \in \Pi} O_\pi = m < \infty$ is attained, is a finite constant and
 $$
 \sup_{\pi \in \Pi} \mathbb{P}[\pi \models \varphi] - \mathbb{P}[\pi_\gamma^\ast \models \varphi] \leq 2 m \log(\frac{1}{\gamma})
 $$
\end{cor}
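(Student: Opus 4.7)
The plan is to reduce Corollary \ref{cor:deterministic} to Theorem \ref{thm:finite_bound} by verifying that $\sup_{\pi \in \Pi} O_\pi$ is attained and finite under the stated finiteness hypotheses. The corollary itself outlines the three facts to establish: (i) the combined state and action spaces are finite, (ii) it suffices to restrict to deterministic stationary policies, of which there are only finitely many, and (iii) $O_\pi$ is finite on this restricted class.

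First, I would unpack the finiteness. Since $\SState^\mathcal{M}$ and $\A^\mathcal{M}$ are finite, and the LDBA $\B$ has finite state set $\SState^\B$ and finite jump action set $\A_\B$ by Definition \ref{def:ldba}, the product $\mathcal{Z} = \SState^\mathcal{M} \times \SState^\B$ and the combined action set $\A((s,b)) = \A^\mathcal{M}(s) \cup \A^\B(b)$ are both finite. Viewing the synchronized process with transition kernel $P$ from Eq.~\eqref{eq:relation} as an MDP on $\mathcal{Z}$, the existence of an optimal deterministic stationary policy follows from standard finite-state/finite-action MDP theory \cite{puterman2014markov}, so the optimization in Eq.~\eqref{prob:discounted} can be restricted to the finite set $\Pi_{\text{det}}$ of deterministic stationary policies.

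Second, and this is the only substantive step, I would show that $O_\pi < \infty$ for every $\pi \in \Pi_{\text{det}}$. Under such a $\pi$, the joint process $(s_t, b_t)$ is a homogeneous Markov chain on the finite set $\mathcal{Z}$. Decompose $\mathcal{Z}$ into transient states and closed recurrent communicating classes. If a trajectory settles in a recurrent class that intersects $\SState^{\B^\ast}$ (that is, $\{(s,b) \in \mathcal{Z} : b \in \SState^{\B^\ast}\}$), then by recurrence an accepting $b$ is visited infinitely often and $\tau \models \varphi$. Consequently, on the event $\{\tau \not\models \varphi\}$, the trajectory is eventually absorbed into a recurrent class disjoint from $\SState^{\B^\ast}$, so all visits to $\SState^{\B^\ast}$ occur during the transient phase. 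In a finite Markov chain the expected total time spent in transient states is finite, which upper bounds $O_\pi$.

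Third, combining the finiteness of $\Pi_{\text{det}}$ with $O_\pi < \infty$ for each $\pi$, the quantity $\sup_{\pi \in \Pi} O_\pi = \max_{\pi \in \Pi_{\text{det}}} O_\pi =: m$ is a maximum over finitely many finite values, hence attained and finite. Substituting this $m$ into the conclusion of Theorem \ref{thm:finite_bound} yields the desired inequality. The main (mild) obstacle is the Markov-chain decomposition argument in the second step; everything else is an immediate reduction to the previously established results.
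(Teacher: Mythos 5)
Your proposal is correct and follows the same overall route as the paper, which in fact offers no separate proof: the corollary's statement \emph{is} its argument (finiteness of $\mathcal{Z}$ and $\A$, restriction to the finitely many deterministic policies via \cite{puterman2014markov}, hence the supremum is a maximum, then substitute into Theorem \ref{thm:finite_bound}). The one place you go beyond the paper is your second step: the paper silently jumps from ``finitely many policies'' to ``$\sup_\pi O_\pi < \infty$,'' which requires knowing that $O_\pi < \infty$ for \emph{each} deterministic $\pi$ --- a finite maximum of values is only finite if each value is. Your transient/recurrent decomposition of the induced finite Markov chain on $\mathcal{Z}$ (on $\{\tau \not\models \varphi\}$ the chain is absorbed into a recurrent class disjoint from the accepting states, so all visits to $\SState^{\B^\ast}$ occur in the transient phase, whose expected length is finite) supplies exactly the missing justification, and is the right argument. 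Two minor points to tidy: (i) $O_\pi$ is a conditional expectation given $\{\tau \not\models \varphi\}$, so you should either handle the case $\mathbb{P}[\tau \not\models \varphi] = 0$ by convention or note that conditioning on a positive-probability event inflates the unconditional bound by at most $1/\mathbb{P}[\tau \not\models \varphi]$, which is still finite; (ii) the restriction to deterministic policies should be justified for \emph{both} objectives appearing in Theorem \ref{thm:finite_bound} (the satisfaction probability and the eventually-discounted value), not just the discounted one --- the former is the standard memoryless-determinacy result for B\"uchi objectives on finite product MDPs rather than a direct consequence of \cite{puterman2014markov}.
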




\subsection{Interpretation} \label{sec:interpretation}
Theorem \ref{thm:finite_bound} relies on the quantity $\sup_{\pi \in \Pi} O_\pi$ to be finite for the bound to have meaning. In fact, we need only make requirements on $M_\pi(\log(\gamma))$ but the requirements are more easily understood on $O_\pi$. As an aside, $M_\pi(\log(\gamma))$ can be interpreted as the moment generating function of the random variable which is the number of visits to $\SState^{\B^\ast}$. Instead we consider the equally natural quantity $O_\pi$. $O_\pi$ is the (average) number of times that a good state is visited by a trajectory that does not satisfy the specification. Ideally, this number would be small and it would be easy to discriminate against good and bad policies. 

\textbf{The bad news.} In the case that $\Pi$ is an infinite class, conditions for ensuring $\sup_{\pi \in \Pi} O_\pi$ is finite is nontrivial and is dependent on the landscape of the transition function $P$ of the MDP and $\Pi$. 

Let us suppose $\sup_{\pi \in \Pi} O_\pi$ is infinite. This means there are policies that induce bad trajectories that eventually fail to reach $\SState^{\B^\ast}$, but along the way visited $\SState^{\B^\ast}$ an arbitrarily large (but finite) number of times. 
In other words, they are policies that are indistinguishable from actual probability-optimal policies until the heat death of the universe. 

Consider the specification in Figure \ref{fig:motivation} (right), given by infinitely often cycle between red and yellow while avoiding blue. A good-looking bad policy is one that accomplishes the task frequently but, amongst the times that it fails, it would cycle between red and yellow many times before failing. $\sup_{\pi \in \Pi} O_\pi$ being infinite means that there are policies that will cycle arbitrarily many times before failing. 


\textbf{The good news.} Corollary \ref{cor:deterministic} reveals that discretization suffices to generate probability optimal policies, with suboptimality shrinking at a rate of $\log(\frac{1}{\gamma})$. This suggests that compactness of $P$ and $\Pi$ and continuity of $P$ may very well be enough but we leave these conditions for future work. Finally, since all computers deal with finite precision, the number of policies is finite and therefore Corollary \ref{cor:finite} similarly applies.

\section{LTL Counterfactual Experience Replay}


One can optimize the formulation in Eq \eqref{prob:discounted} using any Q-learning or policy gradient approach, as seen in Algorithm \ref{algo:main} (Line 4). However, doing so is challenging since it suffers from reward sparsity: the agent only receives a signal if it reaches a good state. 

\begin{algorithm}[t]
	\caption{$\texttt{Learning with LCER}$ } 
	\label{algo:main}
	\begin{algorithmic}[1]
            \REQUIRE Maximum horizon $T$. Replay buffer $D = \{\}$.
            \FOR{$k = 1, 2, \ldots$}
                \STATE Run $\pi_{k-1}$ in the MDP for $T$ timesteps and collect
                $\tau = (s_0, b_0, a_0,\ldots,s_{T-1}, b_{T-1}, a_{T-1}, s_{T}, b_{T})$
                \STATE $D_k \leftarrow \texttt{LCER}(D_{k-1}, \tau)$ 
                \STATE $\pi_{k} \leftarrow \texttt{Update}(\pi_{k-1}, D_k)$ \COMMENT{Q-learn/Policy grad.} \\
            \ENDFOR
	\end{algorithmic}
\end{algorithm}

We combat reward sparsity by exploiting the LDBA: $P^\mathcal{\B}$ is completely known. By knowing  $P^\mathcal{\B}$, we can generate multiple off-policy trajectories from a single on-policy trajectory by modifying which stats in the LDBA we start in, which notably does not require any access to the MDP transition function $P^\mathcal{M}$. We call this approach \emph{LTL-guided Counterfactual Experience Replay}, $\texttt{LCER}$ (Algorithm \ref{algo:main}, Line 3), as it is a modification of standard experience replay \citep{lin1992self,mnih2013playing,mnih2015human} to include counterfactual experiences elsewhere in the LDBA. $\texttt{LCER}$ is most simply understood through Q-learning, and needs careful modification for policy gradient methods.

\textbf{Q-learning with $\texttt{LCER}$.} 
See Algorithm \ref{algo:q_learning} for a synopsis of $\texttt{LCER}$ for Q-learning. Regardless of whatever state $s \in \SState^\mathcal{M}$ the agent is in, we can pretend that the agent is in any $b \in \SState^{\B}$. Then for any action the agent takes we can store experience tuples:
\begin{equation}\label{q-learn:all_pairs}
    \{(s, b, a, r, s', \tilde{b'}) \;|\;\; \forall b \in \SState^\B\}
\end{equation}
where $\tilde{b'} = P^{\B}(b, L^\mathcal{M}(s'))$ is the transition that would have occurred from observing labelled state $L(s')$ in state $(s, b)$ and $r = \mathbf{1}_{\tilde{b}' \in B^\ast}$. Furthermore we can add all jump transitions:
\begin{equation}
    \{(s, b, \epsilon, r, s, \tilde{b'}) \;|\;\; \forall b  \in \SState^\B, \forall\epsilon \in \A^\B(b) \}
\end{equation}
since jumps also do not affect the MDP. Notice when we add the jumps that $s' = s$, since only the LDBA state shifts in a jump.


\begin{algorithm}[t]
	\caption{$\texttt{LCER}$ for Q-learning} 
	\label{algo:q_learning}
	\begin{algorithmic}[1]
            \REQUIRE Dataset $D$. Trajectory $\tau$ of length $T$.
            \FOR{$(s_t, a_t, s_{t+1}) \in \tau$}
                \FOR{$b \in \SState^{\B}$}
                    \STATE Set $\tilde{b} \leftarrow P^{\B}(b, L^\mathcal{M}(s_{t+1}))$
                    \STATE $D \leftarrow D \cup (s_t, b, a_t, \mathbf{1}_{\tilde{b} \in B^\ast}, s_{t+1}, \tilde{b})$
                    \FOR{$\epsilon \in \mathcal{A}^{\B}(s)$}
                        \STATE Set $\tilde{b} \leftarrow P^{\B}(b, \epsilon)$
                        \STATE $D \leftarrow D \cup (s_t, b, \epsilon, \mathbf{1}_{\tilde{b} \in B^\ast}, s_t, \tilde{b})$
                    \ENDFOR
                \ENDFOR 
            \ENDFOR
            \RETURN{$D$}
	\end{algorithmic}
\end{algorithm}

\textbf{Policy Gradient with $\texttt{LCER}$.}  See Algorithm \ref{algo:pg} for a summary of $\texttt{LCER}$ for policy gradient. For policy gradient, unlike Q-learning, it is necessary to calculate future reward-to-go: $R_k(\tau) = \sum_{i=k}^T \Gamma_i \mathbf{1}_{\{b_i \in \SState^{\B^\ast}\}}$. Thus, we have to generate entire trajectories that are consistent with $P^\B$ rather than independent transition tuples as in Eq \eqref{q-learn:all_pairs}. We will show how to generate all feasible trajectories.

Consider a trajectory $\tau = (s_0, b_0, a_0,\ldots, s_{T}, b_{T})$ was collected. 
Let us remove jump transitions $(s_i, b_i, a_i)$ where $a_i \in \mathcal{A}^\B(b_i)$ and consider the projection of the trajectory to the MDP $\tau_\M = (s_0, s_1, \ldots, s_T)$.
We should only have control over the initial LDBA state $b_0$ as all other automaton states $(b_1, \ldots, b_T)$ in a trajectory sequence are determined by $\tau_{\mathcal{M}}$ and $b_{i+1} = P^\B(b_{i}, L^{\mathcal{M}}(s_i))$.

Therefore we add 
\begin{multline}
    \tilde{\mathcal{T}}(\tau) = \{(s_0, \tilde{b}_0, a_0,\ldots, s_{T}, \tilde{b}_{T}) \;|\;\; \\
    \forall \tilde{b}_0  \in \SState^\B, \tilde{b}_i = P^\B(\tilde{b}_{i-1}, L^{\mathcal{M}}(s_{i}))  \}
\end{multline}
where only the LDBA states are different between the trajectories.

Now we handle jump transitions. Consider some $\tilde{\tau} \in \tilde{\mathcal{T}}(\tau)$. Recall, a jump transition can occur whenever $\mathcal{A}^\B(\tilde{b}_i)$ is non-empty. This involves adding a trajectory that is identical to  $\tilde{\tau}$ all the way until the jump occurs. The jump occurs and then the same action sequence and MDP state sequence follows but with different LDBA states. Specifically, suppose $\tilde{b_i}$ had an available jump transitions, $\epsilon \in \mathcal{A}^\B(\tilde{b}_i)$. Then:
\begin{equation}\label{eq:jumps}
    \tilde{\tau}_{i, \epsilon} = (s_0,\tilde{b}'_{i},a_0, \ldots, s_{i}, \tilde{b}'_{i}, \epsilon, s_i, \tilde{b}'_{i+1}, a_{i}, \ldots, s_T, \tilde{b}'_T) 
\end{equation}
where $\tilde{b}'_k = \tilde{b}_i$ for $k \leq i$ and $\tilde{b}'_k = P^\B(\tilde{b}'_{k-1}, L^{\mathcal{M}}(s_{k}))$ otherwise. 

We have to add all possible $\tilde{\tau}'_{i, \epsilon}$ that exist. Let $\mathcal{E}$ be the operator that adds jumps to existing sequences:
\begin{multline}
    \mathcal{E}(\tilde{\mathcal{T}}(\tau)) = \tilde{\mathcal{T}}(\tau) \cup \{ \tilde{\tau}_{i, \epsilon} \text{ from Eq } \eqref{eq:jumps} | \\
    \forall \tilde{\tau} \in \tilde{\mathcal{T}}(\tau), \exists b_i \in \tilde{\tau} \text{ s.t. } \exists \epsilon \in \mathcal{A}^\B(b_i) \}.
\end{multline}
We can only apply $\mathcal{E}(\mathcal{E}(\ldots(\mathcal{E}(\tilde{\mathcal{T}}(\tau)))))$ at most $T$ times since the original length of $\tau$ is $T$.

\begin{algorithm}[t]
	\caption{$\texttt{LCER}$ for Policy Gradient} 
	\label{algo:pg}
	\begin{algorithmic}[1]
            \REQUIRE Dataset $D$. Trajectory $\tau$ of length $T$.
            \STATE Set $\mathcal{\tilde{T}}_0 \leftarrow \mathcal{\tilde{T}}(\tau)$
            \FOR{$k = 1,\ldots, T-1$} 
                \STATE $\mathcal{\tilde{T}}_k \leftarrow \mathcal{E}(\mathcal{\tilde{T}}_{k-1})$
                \IF{$\mathcal{\tilde{T}}_k == \mathcal{\tilde{T}}_{k-1}$}
                \STATE Set $\mathcal{\tilde{T}}_{T-1} \leftarrow \mathcal{\tilde{T}}_k$
                \STATE \textbf{break}
                \ENDIF
            \ENDFOR
            \STATE Set $D \leftarrow D \cup \mathcal{\tilde{T}}_{T-1}$
            \RETURN{$D$}
	\end{algorithmic}
\end{algorithm}

\begin{rem}
The length of $\tau$ has to be sufficiently large to make sure the LDBA has opportunity to reach $\SState^{\B^\ast}$. A sufficient condition is $T \geq |\{b | \A^\B(b) \neq \varnothing\}|$, the number of LDBA states with jump transitions.
\end{rem}

It is possible to constructively generate feasible trajectories during the rollout of a policy rather than after-the-fact, see Appendix \ref{app:LCER-pg}.


\begin{figure*}[!ht]
  \includegraphics[width=1\linewidth]{./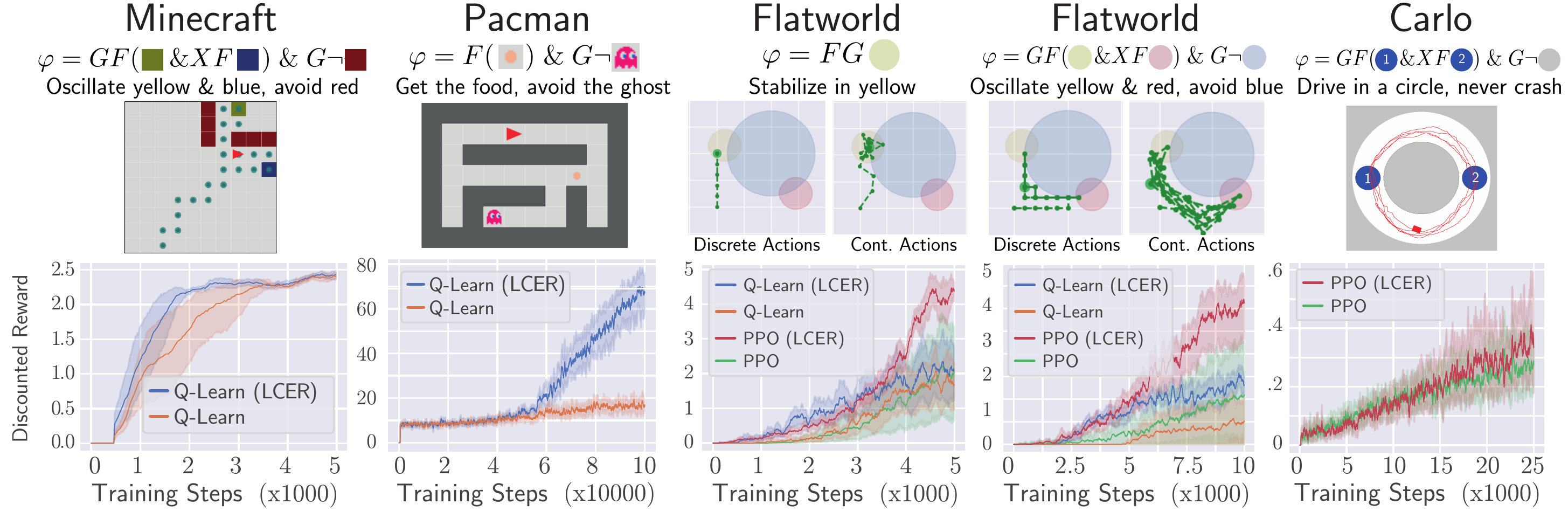}
\caption{\textit{Results.} Each column is an environment and a LTL formula we'd like an agent to satisfy. The environment and a trajectory from the final policy is illustrated in the center of the column (except for Pacman, which is the initial state). The learning curves at the bottom of each column show that adding off-policy data using $\texttt{LCER}$ has strong benefits for empirical performance. \textit{\textbf{First Column:}} Minecraft, where an agent should visit the yellow and blue areas while avoiding the red. The final policy is illustrated via blue dots. \textit{\textbf{Second Column:}} Pacman, where an agent should collect the food while avoiding a ghost. \textit{\textbf{Third Column:}} Flatword, where an agent should eventually stabilize in the yellow region. When the actions are discrete we use Q-learning, when the actions are continuous we use PPO. \textit{\textbf{Fourth Column:}} Same as the third column except an agent should oscillate between the yellow and red regions while avoiding the blue. \textit{\textbf{Fifth Column:}} Carlo, where an agent should drive in a circle without crashing by visiting the blue regions labelled $1$ and $2$ infinitely often.
}
 \label{fig:results}
 \vspace{-.1in}
\end{figure*}

\section{Experiments}


We perform experiments in four domains with varying LTL formulas, state spaces, action spaces, and environment stochasticity summarized in the following section. Our aim is to answer the following two questions: (1) Can we achieve policies that behave the way we expect an LTL-satisfying policy to behave? (2) How does $\texttt{LCER}$ impact the performance of learning. 

\subsection{Environment Details}
\textbf{Minecraft} The Minecraft environment is a $10 \times 10$ deterministic gridworld with 5 available actions: left, right, up, down, nothing. The agent, given by a red triangle starts in the cell $(9,2)$. 
The environment, as well as the final behavior of the agent (given by blue dots) can be seen in Figure \ref{fig:results} (First).

\textbf{Pacman} The Pacman environment is a $5 \times 8$ deterministic gridworld with 5 available actions: left, right, up, down, nothing. The agent, given by a red triangle starts in the cell $(0, 3)$. 
The ghost chases the agent with probability $0.8$ and takes a random action with probability $0.2$, for this reason the environment is stochastic. The starting position of the environment can be seen in Figure \ref{fig:results} (Second).

\textbf{Flatworld} The Flatworld environment (seen in Figure \ref{fig:results} Third and Fourth) is a two dimensional continuous world. The agent (given by a green dot) starts at $(-1, -1)$. The dynamics of the world are given by $x' = x + a/10$ where both $x \in \mathbb{R}^2$ and $a \in [0,1]^2$. We also allow the action space to be discrete by letting there be $5$ actions (right, up, left, down, nothing) where the agent takes a full-throttle action in each respective direction. 
 
\textbf{Carlo} The Carlo environment (seen in Figure \ref{fig:results} Fifth)is a simplified self-driving simulator that uses a bicycle model for the dynamics. The agent observes its position, velocity, and heading in radians for a total of $5$ dimensions. The agent has control over its heading and throttle, for an action space of $[-1, 1]^2$. For this domain, we have chosen to use a circular track where the agent starts in the center of the road at an angle of $\{\pi(1+2i)/4\}_{i=0}^3$ and drive counterclockwise around in a circle without crashing.

\subsection{Methods and Baseline}

When the action space is discrete, we use Q-learning with $\texttt{LCER}$ otherwise we use PPO with $\texttt{LCER}$. The baseline we compare against is the same method without $\texttt{LCER}$. This allows us to verify the extent to which $\texttt{LCER}$ impacts performance. We also plot a trajectory from the final policy for each environment in the middle of each column of Figure \ref{fig:results}, except for Pacman as it is difficult to visualize the interaction between the ghost and pacman outside of video.

\textbf{Dealing with $\A$.} For PPO, the agent's policy is a Gaussian (as in standard implementations) over the continuous action space. In order to deal with jump transitions (in the LDBA) when in a continuous action space (in the MDP), we first let the agent decide whether to execute a jump transition or not (ie. a probabilistic coin flip). If the agent chooses to not, then we take the action according to the Gaussian. The coin flip probability is learned, as well as the Gaussian. For the importance sampling term of PPO, the density of $\pi$ is modified to account for the coin flip. For more details see Appendix \ref{app:experiments}.

\subsection{Results}

\textbf{Can we achieve desired behavior?} The answer here is a resounding yes. For each environment (except Pacman) we illustrate the trajectory of the final policy above each learning curve in Figure \ref{fig:results}. Determining the probability of satisfaction of the final policy is currently a challenging open problem (except in finite-state action spaces). Nevertheless, in each environment the agent qualitatively accomplishes the task. Even for challenging tasks with continuous action spaces, the agent is able to learn to accomplish the LTL specification.

\textbf{Does $\texttt{LCER}$ help in the learning process?} According to the learning curves in the last row of Figure \ref{fig:results}, $\texttt{LCER}$ demonstrably expedites learning. In every environment with the exception of Carlo, $\texttt{LCER}$ generates significant lift over lack of experience replay.

\textbf{Intuition for why $\texttt{LCER}$ helps?} One way of viewing an LDBA is as a curriculum for what steps need to be taken in order to accomplish a task. By replacing the LDBA state of the agent with some other dream LDBA state, we are allowing the agent to ``pretend'' that it has already accomplished some portion of the task. 

As an example, consider the Flatworld example in Figure \ref{fig:results} with $\varphi = GF(y \;\&\; XF(r)) \;\&\; (G\lnot b)$. A baseline agent (without $\texttt{LCER}$) would need to accomplish the entirety of the task in order to see any reward. However, an agent with counterfacual data, need only visit $y$ from state $0$ of the LDBA (see figure \ref{fig:motivation} for the LDBA). Then once the agent is really good at getting to $y$, it needs to learn how to reach $r$ from state $2$. After both of these tasks are accomplished, independently, the agent has solved the whole task. By placing the agent in state $0$ of the LDBA, we are effectively letting the agent pretend that it has already visited $r$. In this sense, part of the task has been accomplished.

\section{Related Work}\label{sec:related-work} 

\textbf{Finding LTL-satisfying policies.} Among the attempts at finding LTL-satisfying policies, Q-learning approaches have been the primary method of choice when the dynamics are unknown and Linear Programming methods when the dynamics are known \cite{Sadigh2014, Hasanbeig2018lcrl, Bozkurt2020Q, Cai2021, Ding2014}. The Q-learning approaches are predominantly constrained to finite state-action spaces. Among the works that extend to continuous action spaces \cite{hasanbeig2020deep}, DDPG is used and takes the form of hierarchical RL which is known to potentially find myopic policies \cite{Icarte2020RewardMachine}. 

Handling a subset of LTL specifications involving those expressible as finite expressions can also be addressed with Reward machines \cite{Icarte2020RewardMachine, Camacho2019LTLAndBeyond, Vaezipoor2021LTL2Action}. Our work handles $\omega$-regular expressions, subsuming regular expressions. Many problems are $\omega$-regular problems, but not regular, such as liveness (something good will happen eventually) and safety (nothing bad will happen forever). 

\textbf{On the formulation in Eq \eqref{prob:discounted}.} Notable prior work on defining the value function as a function of the number of visits to $\SState^\B$ and a state-dependent $\Gamma_i$ function include \citet{Bozkurt2020Q, cai2021modular}. Most notably, these authors use multiple different state-dependent discount rates that have a complicated relationships between them that needs to be satisfied in the limit as $\gamma \to 1^-$. Our work drastically simplifies this, getting rid of the technical assumptions, while strengthening the guarantees. This allows us to find a non-asymptotic dependence on the suboptimality of a policies' probability of LTL satisfaction as a function of $\gamma$. 

\textbf{Off-policy data.} One may view the counterfactual samples in \citet{Icarte2020RewardMachine} as an instantiation of LCER, limited to finite LTL expressions and discrete action spaces. Extension to continuous action space and full LTL requires a careful treatment. In the continuous action and full LTL setting, \cite{Wang2020ACLTL} incorporate starting the agent from a different initial LDBA state (than $b_{-1}$) which is still on-policy but from a different starting state and doesn't take advantage of the entire LDBA structure. This work can be seen as complimentary to our own. 

\textbf{Theory.} Works with strong theoretical guarantees on policy satisfaction include \citet{FuLTLPAC, Wolff2012RobustControl, VoloshinLCP2022} but are once again limited to discrete state/action spaces. Extensions of these work to continuous state space is not trivial as they make heavy use of the discrete Markov chain structure afforded to them.

\section{Discussion}\label{sec:discussion}

Our work, to the best of our knowledge, is the first to make full use of the LDBA as a form of experience replay and first to use policy gradient to learn LTL-satisfying policies. Our eventual discounting formulation is unrestricted to Finitary fragments of LTL like most prior work.

Despite the guarantees afforded to us by eventual discounting, in general the problem given in Eq \eqref{prob:general} is not PAC learnable \cite{Yang2021Intractable}. Though, like SAT solvers, it is still useful to find reasonable heuristics to problems that are difficult. We show that under particular circumstances, eventual discounting gives a signal on the quantity of interest in \eqref{prob:general} and even when it fails, it selects a policy that is difficult to differentiate from a successful one. Further, the bad news discussed in Section \ref{sec:interpretation} we speculate is unavoidable in general LTL specifications, without significant assumptions on the MDP. For example, for stability problems in LTL and assuming control-affine dynamics then Lyapunov functions can serve as certificates for a policies' LTL satisfaction. A reasonable relaxation to this would be require a system to behave a certain way for a long, but finite amount of time. 



 

\newpage
\bibliography{icml2023}
\bibliographystyle{icml2023}


\onecolumn
\appendix
\section{Experiments}\label{app:experiments}

\subsection{Environment Details}
The environment and experiment details are summarized in Table \ref{tab:experiments}. 

\begin{table*}[!ht]
\caption{Environment Details}
\label{tab:experiments}
\begin{center}
\begin{small}
\begin{tabular}{llllll}
\toprule
Environment & Experiment &  $\SState^{\mathcal{M}}$ & $\A^{\mathcal{M}}$ & Dynamics & LTL Formula \\
 \midrule
Minecraft & Q-learning & Discrete & Discrete & Deterministic & $GF(y \;\&\; XF(b)) \;\&\; (G\lnot r)$ \\
Pacman & Q-learning & Discrete & Discrete & Stochastic & $F(\text{food}) \;\&\; (G\lnot \text{ghost})$ \\
Flatworld 1 & Q-learning & $\mathbb{R}^2$ & Discrete & Deterministic & $FGy$ \\
Flatworld 2 & Q-learning & $\mathbb{R}^2$ & Discrete & Deterministic & $GF(y \;\&\; XF(r)) \;\&\; (G\lnot b)$ \\
Flatworld 3 & PPO & $\mathbb{R}^2$ & $[0,1]^2$ & Deterministic & $FGy$ \\
Flatworld 4 & PPO & $\mathbb{R}^2$ & $[0,1]^2$ & Deterministic & $GF(y \;\&\; XF(r)) \;\&\; (G\lnot b)$ \\
Carlo & PPO & $\mathbb{R}^5$ & $[-1,1]^2$ & Deterministic & $GF(\text{zone1} \;\&\; XF(\text{zone2})) \;\&\; (G\lnot \text{crash})$ \\
 \bottomrule
\end{tabular}
\end{small}
\end{center}
 \end{table*}

\subsection{Experiment Setup}\label{sec:app:details}

Each experiment is run with $10$ random seeds. Results from Figure \ref{fig:results} are from an average over the seeds.

\textbf{Q-learning experiments.} Let $k$ be the greatest number of jump transitions available in some LDBA state $k = \max_{b \in \SState^{\B}} |\A^\B(b)|$. Let $m = \max_{s \in \SState^\mathcal{M}}|\A^\mathcal{M}(s)|$. The neural network $Q_\theta(s)$ takes as input $s \in \SState^{\mathcal{M}}$ and outputs $\R^{(m+k) \times |\SState^\B|}$ a $(m+k)$-dim vector for each $b \in \SState^{\B}$. For our purposes, we consider $Q_\theta(s, b)$ to be the single $(m+k)$-dim vector cooresponding to the particular current state of the LDBA $b$.

When $\SState^{\mathcal{M}}$ is discrete then we parametrize $Q_{\theta}(s,b)$ as a table. Otherwise, $Q_\theta(s,b)$ is parameterized by $3$ linear layers with hidden dimension $128$ with intermediary ReLU activations and no final activation. After masking for how many jump transitions exist in $b$, we can select $\arg\max_{i \in [0,\ldots, |\A^\B(b)|]} Q_\theta(s,b)_i$ the highest $Q$-value with probability $1-\eta$ and uniform with $\eta$ probability. Here, $\eta$ is initialized to $\eta_0$ and decays linearly (or exponentially) at some specified frequency (see Table \ref{tab:hyperparam_q}).

At each episode (after a rollout of length $T$), we perform $K$ gradient steps with different batches of size given in Table \ref{tab:hyperparam_ppo}. We use Adam optimizer \cite{kingma2014Adam} with a learning rate also specified by the table. 

When in a continuous state space, we implement DDQN \cite{Hasselt2016DDQN} (rather than DQN) with a target network that gets updated at some frequency specified by Table \ref{tab:hyperparam_ppo}.

\begin{table*}[!ht]
\caption{Hyperparameters for Q-learning experiments (Discrete Action Space)}
\label{tab:hyperparam_q}
\begin{center}
\begin{small}
\begin{tabular}{llllllllllll}
\toprule 
 {} & \multicolumn{2}{c}{$\eta$} &   \multicolumn{3}{c}{$\eta$ Decay}  \\ 
 \cmidrule(lr){2-3} \cmidrule(lr){4-6}
 Experiment & $\eta_0$ & Min $\eta$ & Type & Rate & Freq & Batch size &  $K$ ($\#$ batches) & LR & Target update & T & $\gamma$\\
  \midrule 
Minecraft & .3 & 0 & Exponential & .9 & 100 & 128 & 20 & - & - & 100 & .99\\
Pacman & .4 & 0 & Linear & .05 & 400 & 512 & 200 & - & - & 100 & .999\\
Flatworld 1& .8 & .15 & Exponential & .9 & 100 & 128 & 5 & .001 & 15 & 20 & .95\\
Flatworld 2& .8 & .15 & Exponential & .9& 100 & 128 & 5 & .001 & 15 & 50 & .95\\
\end{tabular}
\end{small}
\end{center}
 \end{table*}

\textbf{PPO experiments.} Let $k$ be the greatest number of jump transitions available in some LDBA state $k = \max_{b \in \SState^{\B}} |\A^\B(b)|$. The neural network $f_\theta(s)$ takes as input $s \in \SState^{\mathcal{M}}$ and outputs $\R^{(k+2) \times |\SState^{\B}|}$ is a $(k+2)$-dim vector for each $b \in \SState^{\B}$. For our purposes, we consider $f_\theta(s, b)$ to be the single $(k+2)$-dim vector cooresponding to the particular current state of the LDBA $b$.

$f_\theta(s,b)$ is parameterized by $3$ linear layers with hidden dimension $64$ with intermediary ReLU activations.  The first dimension corresponds to sampling a Gaussian action $a \sim \mathcal{N}(f_\theta(s,b)[0], \text{diag}(\sigma^2))$ where $\sigma$ is initialized to $\sigma_0$ (see Table \ref{tab:hyperparam_ppo}) and decays exponentially (at a rate given in the table) every $10$ episodes. The remaining $k+1$ dimensions (after proper masking to account for the size of $|\A^\B(b)|$ and softmax) represent the probability $p = [p_{a},p_{\epsilon_0}, \ldots, p_{\epsilon_k}]$ of taking either the MDP action $a$ or a some jump transition $\epsilon_i$. We sample from a Categorical($p$) variable to select whether to return $a \sim \mathcal{N}(\text{Tanh}(f_\theta(s,b)[0]), \text{diag}(\sigma^2))$ or $a = \epsilon_i$ for some $i$. The density can be calculated by multiplying $p_a$ by the Gaussian density when $a$ is selected, and $p_{\epsilon_i}$ otherwise.

For the critic, we have a parametrized network $f_\phi(s,b) \to \mathbb{R}$ of $3$ linear layers with hidden dimension $64$ with intermediary Tanh activations and no final activation. 

At each episode (after a rollout of length $T$), we perform $5$ gradient steps with different batches of size given in Table \ref{tab:hyperparam_ppo}. The importance sampling term in PPO is clipped to $1 \pm .4$. The critic learning rate is $.01$. We use Adam optimizer \cite{kingma2014Adam} for both the actor and critic.

\begin{table*}[!ht]
\caption{Hyperparameters for PPO experiments (Continuous Action Space)}
\label{tab:hyperparam_ppo}
\begin{center}
\begin{small}
\begin{tabular}{lllllll}
\toprule
 Experiment & $\sigma_0$  & $\sigma$ Decay Rate & Min $\sigma$ & Batch size & LR Actor & T\\
 \midrule
Flatworld 3 & $1.8$ & $.98$ & $.3$ & $128$ & $.001$ & 20\\
Flatworld 4 & $1.8$ & $.99$ & $.1$ & $128$ & $.001$ & 50\\
Carlo & $.5$ & $.999$ & $.3$ & $16$ & $.0001$ & 500\\
\end{tabular}
\end{small}
\end{center}
 \end{table*}

\newpage
\section{Constructing feasible trajectories for policy gradient during rollout}\label{app:LCER-pg}

Suppose we wanted to generate feasible trajectories in realtime while the policy is being rolled out. That is, we have a partial trajectory of the form $\tau_t = (s_0, b_0, a_0,\ldots, s_t, b_t)$ generated by running $\pi$ in $P$. Let $a_t = a \in \A$ be the $t$-th action taken by $\pi$ and $s_{t+1} = s' \in \mathcal{M}$ be the next observed state observed in the MDP.

Let $\mathcal{T}_t$ be the current set of feasible (partial) trajectories at timestep $t$. Elements $\tau_k = (s_0, b_0, a_0,\ldots, s_k, b_k) \in \mathcal{T}_t$ denote $k$-step (partial) trajectory, not necessarily part of the trajectory observed during the course of a rollout of $\pi$. Here, $k \geq t$. 
Then, for each $\tau_k \in \mathcal{T}_t$, one of 4 cases holds:

\textbf{Case 1.} Action $a$ is not a jump transition (ie. $a \in \A^\M(s_k)$) and there are no jump transitions available in $b_k$ ($\A^\B(b_k) = \varnothing$). Then we can form the concatenation: $\tau_{k+1} = \tau_k \cup (a, s', b_{k+1})$ where $b_{k+1} = P^\B(b_k, L^\mathcal{M}(s'))$. We set $\mathcal{T}_{\epsilon} = \varnothing$.

\textbf{Case 2.} Action $a$ is a jump transition and is currently feasible in $b_k$ (ie. $a \in \A^\M(b_k)$). Then we can form the concatenation $\tau_{k+1} = \tau_k \cup (a, s', b_{k+1})$ where $b_{k+1} = P^\B(b_k, a)$. We set $\mathcal{T}_{\epsilon} = \varnothing$.


\textbf{Case 3.} Action $a$  is a not a jump transition (ie. $a \in \A^\M(s_k)$), but there is at least one feasible jump transition in $b_k$ (ie. $\A^\B(b_k) \neq \varnothing$). Then, in addition to forming $\tau_{k+1}$ from Case 1, we have all the possible jumps:
\begin{equation*}
    \mathcal{T}_{\epsilon} = \{\tau_k \cup (\epsilon, s_k, b_{k+1}, a, s', b_{k+2}) | \forall \epsilon \in \A^\B(b_k),\\ b_{k+1} = P^\B(b_k, \epsilon), b_{k+2} = P^\B(b_{k+1}, a_t)\}
\end{equation*}

\textbf{Case 4.} Action $a$  is a jump transition is infeasible in $b_k$ (ie. $ a \not\in \A^\B(b_k)$). In this case, we just pass this trajectory. Setting $\tau_{k+1} = \tau_k$ and $\mathcal{T}_\epsilon = \varnothing$.

At the end of iterating over each element of $\tau_k \in \mathcal{T}_t$ and forming $\tau_{k+1}$ and $\mathcal{T}_{\epsilon}$, we can update our current set of feasible trajectories:
\begin{equation}\label{eq:pg-update}
    \mathcal{T}_{t+1} = \cup_{\tau_k \in \mathcal{T}_{t}} \bigg((\mathcal{T}_{t} \setminus \{\tau_k\})\cup \{\tau_{k+1}\} \cup \mathcal{T}_{\epsilon}\bigg)
\end{equation}

To put this process simply, we are swapping out $\tau_k$ for $\tau_{k+1}$ and also adding in any jump transitions if they are available. The algorithm can be seen in Algo \ref{algo:pg_2}.

\begin{algorithm}[!ht]
	\caption{$\texttt{LCER}$ for Policy Gradient (Option 2)} 
	\label{algo:pg_2}
	\begin{algorithmic}[1]
            \REQUIRE Dataset $D$. Trajectory $\tau$ of length $T$.
            \STATE Set $\mathcal{T}_0 \leftarrow \{(s_0, b) | b \in \B\}$
            \FOR{$(s_t, a_t, s_{t+1}) \in \tau$}
                \STATE Form  $\mathcal{T}_t$ according to Eq \eqref{eq:pg-update}
            \ENDFOR
            \STATE Set $D \leftarrow D \cup \mathcal{T}_{T}$
            \RETURN{$D$}
	\end{algorithmic}
\end{algorithm}




\end{document}